\newtheorem{lemma}{Lemma}[section]
\newtheorem{hypothesis}{Hypothesis}[section]
\newtheorem{notation}{Notation}[section]
\newtheorem{remark}{Remark}[section]
\newtheorem{theorem}{Theorem}[section]
\newtheorem{definition}[theorem]{Definition}
\let\OLDthebibliography\thebibliography
\renewcommand\thebibliography[1]{
  \OLDthebibliography{#1}
  \setlength{\parskip}{5pt}
  \setlength{\itemsep}{0pt plus 2.0ex}
}
\title{TreePiece: Faster Semantic Parsing via Tree Tokenization}
\author{Sid Wang \\
  Meta Inc. USA \\
  \texttt{yuwang2020@meta.com} \\\And
    Akshat Shrivastava \\
   Meta Inc. USA \\
  \texttt{akshats@meta.com} \\
  \And
    Sasha Livshits \\
   Meta Inc. USA \\
  \texttt{alll@meta.com} \\
  }
\begin{document}
\nolinenumbers
% to set space between equations
\setlength{\abovedisplayskip}{4pt plus 2.95ex}
\setlength{\belowdisplayskip}{4pt plus 2.95ex}
% \maketitle
{\makeatletter\acl@finalcopytrue
  \maketitle
}

\begin{abstract}
\emph{Autoregressive} (AR) encoder-decoder neural networks have proved successful in many NLP problems, including \emph{Semantic Parsing} -- a task that translates natural language to machine-readable \emph{parse trees}. However, the sequential prediction process of AR models can be slow. To accelerate AR for semantic parsing, we introduce a new technique called \emph{TreePiece} that tokenizes a parse tree into subtrees and generates one subtree per decoding step. On TOPv2 benchmark, TreePiece shows $4.6$ times faster decoding speed than standard AR, and comparable speed but significantly higher accuracy compared to \emph{Non-Autoregressive} (NAR). 
% On the TOPv2 benchmark, TreePiece decoder is 4.6 times faster than a standard AR decoder. Moreover, our approach significantly outperforms \emph{Non-Autoregressive} (NAR) models in terms of accuracy while maintaining a similar speed.
% In recent years, deep learning has shown success in a variety of NLP tasks. This paper focuses on \emph{Autoregressive} (AR) encoder-decoder neural network modeling for \emph{Semantic Parsing}, an NLP task that translates natural language to machine readable representations, whose derivations intrinsically carry tree-structures named \emph{parse trees}. AR model makes predictions sequentially and could be slow. To accelerate AR semantic parsing, we introduce a new technique called \emph{TreePiece} that tokenizes a parse tree into subtrees and generates one subtree per decoding step. On the TOPv2 benchmark, TreePiece decoder is 4.6 times faster than a standard AR decoder. In comparison to \emph{Non-Autoregressive} (NAR) models, our approach yields significantly better performance with roughly similar speed.   
\end{abstract}

\section{Introduction}
\emph{Autoregressive} (AR) modeling \cite{Sutskever2014SequenceTS} is a commonly adopted framework in NLP where the next prediction is conditioned on the previously generated tokens. This paper focuses on AR approach for \emph{Semantic Parsing} \cite{Wong2005LearningFS}, an NLP task that converts a natural language utterance to a machine-interpretable symbolic representation  called \emph{logical form}. The sequence of actions to derive a logical form is isomorphic to a directed tree and often referred to as a \emph{parse tree} \cite{Zettlemoyer2005LearningTM}. 

The runtime latency of AR linearly correlates to the output length and could result in low inference speed \cite{Gu2017NonAutoregressiveNM,Wang2018SemiAutoregressiveNM}. \emph{Non-Autoregressive} (NAR) modeling \cite{Gu2017NonAutoregressiveNM,Wei2019ImitationLF,Ma2019FlowSeqNC}, on the other hand, is able to produce outputs in parallel and reduce latency by an order of magnitude \cite{Ghazvininejad2019ConstantTimeMT}. However, NAR performs considerably worse than its AR counterparts without extra training recipes \cite{Wang2019NonAutoregressiveMT,Zhou2020ImprovingNN,Su2021NonAutoregressiveTG}. The quality benefits of AR models therefore motivates us to improve their speed, rather than exploring NAR.\\ % In this paper, we propose a novel approach of tokenizing parse trees into large units called \emph{TreePiece units}. Then we build an AR model that predicts one \emph{TreePiece unit} at a time, reducing the number of steps needed to generate a full parse tree.\\
\\
\textbf{Our contributions} 
\smallskip

    % $\bullet$ We introduce \emph{TreePiece} -- an original tokenization technique -- to accelerate AR semantic parsing modeling. To the best of our knowledge, we are the first to extend subword-tokenizer algorithm to semantic trees such that each token is a subtree. % all tokens inherit tree structures.

    $\bullet$ We propose a novel approach of tokenizing parse trees into large units called \emph{TreePiece units}, and then building an AR model that predicts one \emph{TreePiece unit} at a time, thus reducing the number of steps needed to generate a full parse tree. To the best of our knowledge, we are the first to extend subword-tokenizer algorithm to semantic trees such that each token is a subtree. 
    
    $\bullet$ We validate our approach on TOPv2 benchmark and show that \emph{TreePiece} decoding is 4.6 times faster than standard AR with less than $0.2\%$ accuracy degradation, and nearly as fast as NAR with up to $0.8\%$ accuracy gains.

    % $\bullet$ On the TOPv2 benchmark, our TreePiece approach speeds up AR decoding by 4.6 times while maintaining accuracy within $0.2\%$ of the standard AR model. It also performs nearly as fast as NAR models with up to $0.8\%$ accuracy gains.
    
    %demonstrating superior results over commonly used baselines in respect of quality and efficiency. 
    
    $\bullet$ We provide theoretical proofs to support our main algorithms and their variants.

\section{Methods}
\label{methods}
\subsection{Parse tree}
\label{parsetree}
In this paper, we utilize the \emph{hierarchical semantic representations} based on \emph{intent} and \emph{slot} \cite{Gupta2018SemanticPF}, allowing for modeling complex compositional queries in task-oriented dialog systems. See Figure \ref{glue} (LHS) for an example. We begin with a few recurring definitions. 
% In this paper, we adopt the \emph{hierarchical semantic representations} based on \emph{intent} and \emph{slot} \cite{Gupta2018SemanticPF}, a useful parse tree design that supports task oriented dialog systems and has the ability to model complex compositional queries. See the right hand side of Figure \ref{glue} for an example. We begin with a few recurring definitions. 

\begin{definition}[\emph{Ontology}]
\label{ontology}
\emph{A \emph{parse tree} node is called an \emph{ontology} iff it represents an \emph{intent/slot}, prefixed by \texttt{in:} and \texttt{sl:} respectively.}
\end{definition}
\begin{definition}[\emph{Skeleton}]
\label{skeleton}
\emph{The \emph{skeleton} of a \emph{parse tree} is the subtree that consists of all \emph{ontologies}.} 
\end{definition}
\begin{definition}[\emph{Utterance leaf}]
\label{leaf}
\emph{A \emph{text-span} node is called an \emph{utterance leaf} iff its parent is a \emph{slot}\footnote{We adopt the \emph{decoupled form} proposed in \cite{Aghajanyan2020ConversationalSP}, which simplifies compositional representations by ignoring text spans that are not \emph{utterance leaves}.}.}

\end{definition}

% \footnote{Non-\emph{utterance leaves} text spans will be ignored during parse tree modeling, a simplification proposed in \cite{Aghajanyan2020ConversationalSP} called \emph{decoupled form} to address the limitations of the compositional representations.}

% \begin{remark}
% \label{fixont}
% \emph{To better focus on conveying the main ideas of the paper, we use a fixed set of ontologies and shall not consider new domains.}
% \end{remark}
% \begin{remark}
% \label{decouple}
% \emph{Text-spans that are not utterance leaves will be ignored during parse tree modeling. This simplification was proposed in \cite{Aghajanyan2020ConversationalSP} as \emph{decoupled form} to address the limitations of the compositional representations.}
% \end{remark}
% \begin{remark}
% \label{span-pointer}
% \emph{We represent \emph{utterance leaves} in the form of \emph{span pointer} \cite{Shrivastava2021SpanPN}, a design that simplifies the parsing task.}
% \end{remark}

\subsection{TreePiece tokenizer algorithm}

% We propose an algorithm to train a tokenizer on \emph{skeletons} (see Definition \ref{skeleton}), named \emph{TreePiece tokenizer}, that naturally induces a directed-tree data-structure on each token in the tokenized results.
% We propose the training algorithm for a \emph{TreePiece tokenizer} that is able to tokenize a \emph{skeleton} (ref. Definition \ref{skeleton}) into subtrees.
We propose the algorithm to train a \emph{TreePiece tokenizer} that partitions a \emph{skeleton} into subtrees.
\begin{definition}[\emph{TreePiece vocabulary}]
\label{tpvocab}
\emph{The minimal open vocabulary of all possible subtree units returned from a \emph{TreePiece tokenizer} is called a \emph{TreePiece vocabulary}. We refer to each element in a \emph{TreePiece vocabulary} as a \emph{TreePiece unit}.}
\end{definition}
\begin{definition}[\emph{TreePiece simplex}]
\label{tpsimplex}
\emph{Let $\mathcal{V}$ be a \emph{TreePiece vocabulary} and $t$ be any \emph{TreePiece unit}. A \emph{TreePiece simplex} $\boldsymbol{p}$ is a mapping from $\mathcal{V}$ to the unit interval $[0, 1]$ such that $\sum_{t\in\mathcal{V}} \boldsymbol{p}(t) = 1$.}
\end{definition}
Our training proceeds in two stages: (1) generating \emph{TreePiece vocabulary} from a training corpus; (2) optimizing the \emph{TreePiece simplex} via an \emph{Expectation-Maximization} (EM) algorithm. 

\subsubsection{Vocabulary generation}
\label{vocab-gen}
This stage resembles the merging operation in \emph{Byte Pair Encoding} (BPE) \cite{Gage1994ANA,Sennrich2015NeuralMT}. Given a training corpus, denote its skeletons by $\mathscr{S}$. We initialize the \emph{TreePiece vocabulary} $\mathcal{V}$ as the set of \emph{ontologies} extracted from $\mathscr{S}$ and $\mathcal{F}_0$ as the map between \emph{ontologies} and their frequencies in $\mathscr{S}$. Now repeat the steps below until $\mathcal{V}$ reaches a pre-determined size:
\begin{itemize}
    \item Count the frequencies of all adjacent but unmerged \emph{TreePiece unit} pairs in $\mathscr{S}$. Find the most frequent pair $p^*$ and its frequency $n^*$. 
    \item Merge $p^*$ in every $S\in \mathscr{S}$ that contains $p^*$, add $p^*$ to $\mathcal{V}$, and update $\mathcal{F}_0$ with $\mathcal{F}_0(p^*) = n^*$.
\end{itemize}

\begin{figure*}[t]
     \centering
     \begin{minipage}[t]{1.0\linewidth}
         \centering
         \includegraphics[width=0.87\textwidth, height=9.5cm]{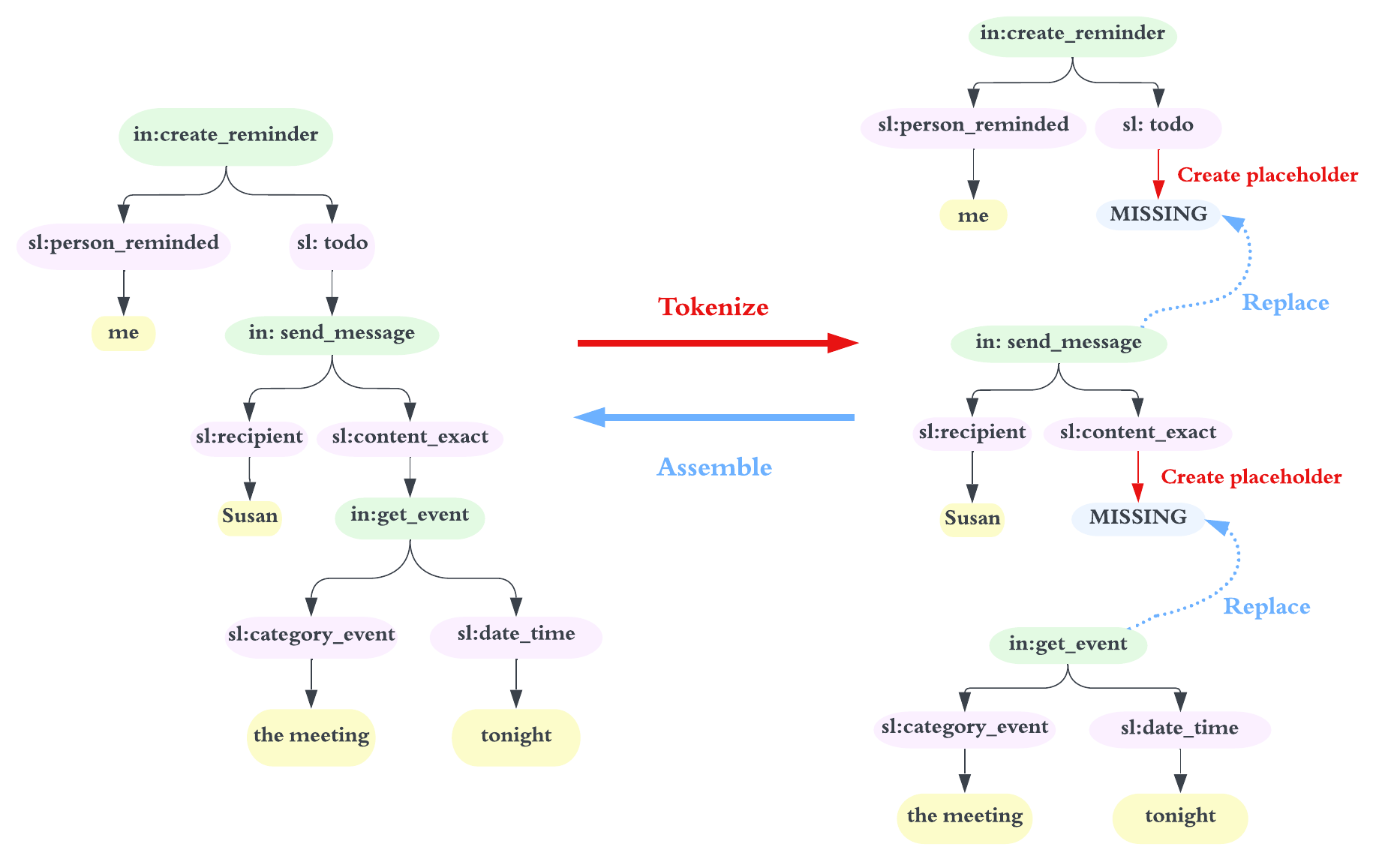}
     \end{minipage}
     \caption{Illustration of tokenizing parse tree/assembling TreePiece units with the placeholder design for given utterance ``\emph{Remind me to send Susan an email about the meeting tonight}''.}
     \label{glue}
\end{figure*}

\subsubsection{EM algorithm}
\label{em-section}
Briefly speaking, we initialize the \emph{TreePiece simplex} $\boldsymbol{p}_0$ by setting $\boldsymbol{p}_0(t)$ (for each $ t\in \mathcal{V}$) to be the normalized frequency ${\mathcal{F}_0(t)}/{\sum_{\tau\in \mathcal{V}}\mathcal{F}_0(\tau)}$, and will iteratively derive $\boldsymbol{{p}_{i+1}}$ for $i=0, 1, 2,\cdots$ according to the following inductive formula:
\begin{equation}
\label{em}
\boldsymbol{{p}_{i+1}} = \text{argmax}_{\boldsymbol{{p}}}\sum_{S\in\mathscr{S}}\mathbb{E}\big{[}\log \mathbb{P}(S, \pi; \boldsymbol{{p}}) \big{\rvert} S; \boldsymbol{{p}_i}\big{]}.
\end{equation}
Here $\pi$ denotes a partition of skeleton $S$. In general, problem \eqref{em} is NP-hard as it involves summing over $\Pi_S$, the set of all possible partitions of $S$:

\begin{eqnarray}
\begin{split}
\label{cond}
&\mathbb{E}\big{[}\log \mathbb{P}(S, \pi; \boldsymbol{{p}}) \big{\rvert} S; \boldsymbol{{p}_i}\big{]}\\
=& \sum_{\pi\in\Pi_S} \log \mathbb{P}(S, \pi; \boldsymbol{{p}}) \cdot \frac{\mathbb{P}(S, \pi; \boldsymbol{{p}_i})}{\mathbb{P}(S; \boldsymbol{{p}_i})}
\end{split}
\end{eqnarray}
To solve \eqref{em} in polynomial time, we impose the following assumption on the joint distribution: 
\begin{equation}
\label{singleton}
   \mathbb{P}(S, \pi; \boldsymbol{p}) \propto \left\{ 
  \begin{array}{ c l }
    \prod_{\tau\in\pi}\boldsymbol{p}(\tau) & \text{if }
    \pi=\pi_S(\boldsymbol{p})\\
    0                 & \textrm{otherwise,}
  \end{array}
\right.
\end{equation}
where $\pi_S(\boldsymbol{p}) = \text{argmax}_{\pi\in\Pi_{S}} {\prod}_{\tau \in \pi}\boldsymbol{p}(\tau).$ Applying \eqref{singleton}, we see that all but one summand in \eqref{cond} vanish. Algorithm \ref{token} outlines the EM process to solve \eqref{em}.
\begin{algorithm}
\caption{EM algorithm}\label{token}
\begin{algorithmic}
\State Choose $N_0 \in \mathbb{N}^+, \epsilon_0 >0$; initialize $i\gets 0$, $\Delta\gets +\infty$, $\mathcal{L}_{\text{prev}} \gets -\infty$. 

\While{$i < N_0$ and $\Delta > \epsilon_0$}
    \State $\mathcal{L}_{\text{curr}} \gets 0$, $\mathcal{F}^*\gets \text{Zero function on $\mathcal{V}$}$
    \For{$S\in \mathscr{S}$}
        \State Compute $\pi_S(\boldsymbol{p}_i)$ and $\mathbb{P}(S;\boldsymbol{p}_i)$ \Comment{E-step}
        \State $\mathcal{L}_{\text{curr}} \gets \mathcal{L}_{\text{curr}}+ \log\mathbb{P}(S;\boldsymbol{p}_i)$
        \For{$t\in \pi_S(\boldsymbol{p}_i)$}
            \State $\mathcal{F}^*(t) \gets \mathcal{F}^*(t) + 1$
        \EndFor
    \EndFor
    \For{$t \in \mathcal{V}$}
        \State $\boldsymbol{p}_{i+1}(t) \gets \frac{\mathcal{F}^*(t)}{\sum_{\tau\in\mathcal{V}}\mathcal{F}^*(\tau)}$ \Comment{M-step}
    \EndFor
    \State $i\gets i + 1, \Delta \gets \mathcal{L}_{\text{curr}} - \mathcal{L}_{\text{prev}}$
    \State $\mathcal{L}_{\text{prev}} \gets \mathcal{L}_{\text{curr}}$
\EndWhile
\end{algorithmic}
\end{algorithm}
% The \emph{E-step} in Algorithm \ref{token} uses a Viterbi-type algorithm \cite{Viterbi1967ErrorBF}; see Appendix \ref{viterbi}. % to derive the most likely partition $\pi_S(\boldsymbol{p}_i)$ and its probability $\mathbb{P}(S;\boldsymbol{p}_i)$; see Algorithm \ref{viterbi-algo} in Appendix \ref{viterbi} for the pseudo-code.
In the \emph{E-step} of Algorithm \ref{token} we use a Viterbi-type algorithm \cite{Viterbi1967ErrorBF}, given in Appendix \ref{viterbi}. 

The following Theorem claims that Algorithm \ref{token} solves \eqref{em}. We defer its proof to Appendix \ref{proof}. % see Appendix \ref{proof} for its proof. 
\begin{theorem}
\label{confirmem}
Let $\boldsymbol{p}_0, \boldsymbol{p}_1, \boldsymbol{p}_2, \cdots$ be the sequence of TreePiece simplices obtained from Algorithm \ref{token}. If \eqref{singleton} holds, then $\boldsymbol{p}_{i+1}$ solves the optimization problem \eqref{em}, for $i\ge 0$. Especially, $\sum_{S\in\mathscr{S}}\log\mathbb{P}(S;\boldsymbol{p}_i)$ is monotonically non-decreasing in $i$.
\end{theorem}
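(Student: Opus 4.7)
The plan is to follow the standard EM blueprint, specialized to the degenerate posterior that assumption \eqref{singleton} forces. First, I would rewrite the objective in \eqref{em} using \eqref{singleton}: because $\mathbb{P}(S,\pi;\boldsymbol{p}_i) = 0$ whenever $\pi \ne \pi_S(\boldsymbol{p}_i)$, the ratio $\mathbb{P}(S,\pi;\boldsymbol{p}_i)/\mathbb{P}(S;\boldsymbol{p}_i)$ in \eqref{cond} is a point mass on $\pi_S(\boldsymbol{p}_i)$, so the whole sum in \eqref{cond} collapses to the single term $\log \mathbb{P}(S, \pi_S(\boldsymbol{p}_i); \boldsymbol{p})$. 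Expanding with \eqref{singleton} again and swapping the order of summation yields
\begin{equation*}
\sum_{S\in\mathscr{S}} \mathbb{E}\big[\log \mathbb{P}(S,\pi;\boldsymbol{p})\,\big|\,S;\boldsymbol{p}_i\big] \;=\; \sum_{t\in\mathcal{V}} \mathcal{F}^*(t)\,\log\boldsymbol{p}(t) \;+\; C,
\end{equation*}
where $\mathcal{F}^*(t) := \sum_{S} \#\{\tau \in \pi_S(\boldsymbol{p}_i): \tau = t\}$ is exactly the count accumulated in the inner loop of Algorithm \ref{token}, and $C$ collects the (\,$\boldsymbol{p}$-independent\,) normalizers.

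Next I would solve this constrained maximization. Under the simplex constraint $\sum_{t\in\mathcal{V}}\boldsymbol{p}(t)=1$, the Lagrangian $\sum_t \mathcal{F}^*(t)\log\boldsymbol{p}(t) - \lambda(\sum_t \boldsymbol{p}(t)-1)$ is strictly concave in $\boldsymbol{p}$, so setting its gradient to zero gives the unique optimum $\boldsymbol{p}_{i+1}(t) = \mathcal{F}^*(t)/\sum_{\tau}\mathcal{F}^*(\tau)$, which is precisely the M-step update. This establishes that $\boldsymbol{p}_{i+1}$ indeed solves \eqref{em}.

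For the monotonicity claim, I would invoke the generic EM inequality in the form
\begin{equation*}
\log \mathbb{P}(S;\boldsymbol{p}_{i+1}) - \log \mathbb{P}(S;\boldsymbol{p}_i) \;\ge\; \mathbb{E}\big[\log \tfrac{\mathbb{P}(S,\pi;\boldsymbol{p}_{i+1})}{\mathbb{P}(S,\pi;\boldsymbol{p}_i)} \,\big|\, S;\boldsymbol{p}_i\big],
\end{equation*}
which follows from Jensen's inequality applied to the concave $\log$ after writing $\mathbb{P}(S;\boldsymbol{p}) = \sum_\pi \mathbb{P}(S,\pi;\boldsymbol{p})$ and multiplying and dividing by the posterior $\mathbb{P}(\pi|S;\boldsymbol{p}_i)$ inside the sum. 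By definition of the M-step the right-hand side, summed over $S \in \mathscr{S}$, is non-negative, so $\sum_S \log \mathbb{P}(S;\boldsymbol{p}_i)$ is non-decreasing.

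The subtle point, and where I expect to spend the most care, is the degeneracy created by \eqref{singleton}: the support of $\mathbb{P}(S,\cdot\,;\boldsymbol{p})$ depends on $\boldsymbol{p}$ itself through $\pi_S(\boldsymbol{p})$, so $\log\mathbb{P}(S,\pi_S(\boldsymbol{p}_i);\boldsymbol{p}_{i+1})$ can in principle equal $-\infty$ if the updated optimal partition differs from the old one. I would handle this by working with the proportionality form $\prod_{\tau\in\pi}\boldsymbol{p}(\tau)$ throughout (the normalization cancels inside each ratio) and, in the Jensen step, restricting the expectation to the $\boldsymbol{p}_i$-support, where $\mathbb{P}(S,\pi;\boldsymbol{p}_{i+1})$ is bounded below by $\prod_{\tau\in\pi}\boldsymbol{p}_{i+1}(\tau)$ up to the same normalizer — enough to make the Jensen bound valid and retain monotonicity without the $-\infty$ pathology.
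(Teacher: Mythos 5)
Your overall route is the same as the paper's: collapse the expectation in \eqref{cond} to the single term at $\pi_S(\boldsymbol{p}_i)$, maximize $\sum_{t}\mathcal{F}^*(t)\log\boldsymbol{p}(t)$ over the simplex by Lagrange multipliers to recover the M-step update (the paper packages this as Lemma \ref{gen-lemma} under a slightly more general hypothesis), and then argue monotonicity by the standard EM sandwich. You also correctly flag the real subtlety, namely the $\boldsymbol{p}$-dependent support created by \eqref{singleton}. However, your proposed patch for the monotonicity step is mis-stated: on the $\boldsymbol{p}_i$-support the joint $\mathbb{P}(S,\pi_S(\boldsymbol{p}_i);\boldsymbol{p}_{i+1})$ is \emph{not} bounded below by $\prod_{\tau\in\pi_S(\boldsymbol{p}_i)}\boldsymbol{p}_{i+1}(\tau)$ --- it equals $0$ whenever $\pi_S(\boldsymbol{p}_{i+1})\neq\pi_S(\boldsymbol{p}_i)$, so the generic Jensen inequality you invoke degenerates to $\log\mathbb{P}(S;\boldsymbol{p}_{i+1})\ge-\infty$ and proves nothing. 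The inequality that actually rescues the argument lives on the \emph{marginal}: under \eqref{singleton}, $\mathbb{P}(S;\boldsymbol{p}_{i+1})=\max_{\pi\in\Pi_S}\prod_{\tau\in\pi}\boldsymbol{p}_{i+1}(\tau)\ge\prod_{\tau\in\pi_S(\boldsymbol{p}_i)}\boldsymbol{p}_{i+1}(\tau)$, by definition of $\pi_S(\boldsymbol{p}_{i+1})$ as the argmax; this is precisely the first inequality in the paper's chain \eqref{ineq}. With that substitution the rest of your argument goes through with no Jensen step at all (the posterior is a point mass, so the ``expectation'' is a single evaluation): the M-step optimality gives $\sum_{S}\sum_{\tau\in\pi_S(\boldsymbol{p}_i)}\log\boldsymbol{p}_{i+1}(\tau)\ge\sum_{S}\sum_{\tau\in\pi_S(\boldsymbol{p}_i)}\log\boldsymbol{p}_i(\tau)$, and the exact identity $\mathbb{P}(S;\boldsymbol{p}_i)=\prod_{\tau\in\pi_S(\boldsymbol{p}_i)}\boldsymbol{p}_i(\tau)$ closes the loop.
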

% See Appendix \ref{proof} for proof. 
% \begin{remark}
% In Appendix \ref{ffbs} we present a sampling-based variant of E-step under a different assumption than \eqref{singleton}; see Algorithm \ref{ffbs-algo} for details.
% \end{remark}

\subsection{Modeling}
We describe the model that generates \emph{parse tree} components and the method to piece them together. 
% To model a parse tree from an utterance, we proceed in two stages: (1) generate TreePiece units and utterance leaves (Subsection \ref{model}); (2) piece together the predicted units and obtain a full parse tree (Subsection \ref{recover}).
\subsubsection{Modeling mechanism}
\label{model}
As illustrated in Figure \ref{tp_model}, an encoder computes the hidden states of a given utterance, then an AR decoder consumes the encoder hidden states and generate \emph{TreePiece units} autoregressively. The technique in Subsection \ref{recover} will allow us to put these units together and obtain a full \emph{skeleton}. The \emph{skeleton} then uniquely determines the numbers and positions of all utterance leaves (see Figure \ref{glue}), which offers us the convenience to use an NAR decoder to generate all utterance leaves within one step. 

\begin{figure}[h]
\centering
\includegraphics[width=0.22\textwidth, height=5.8cm]{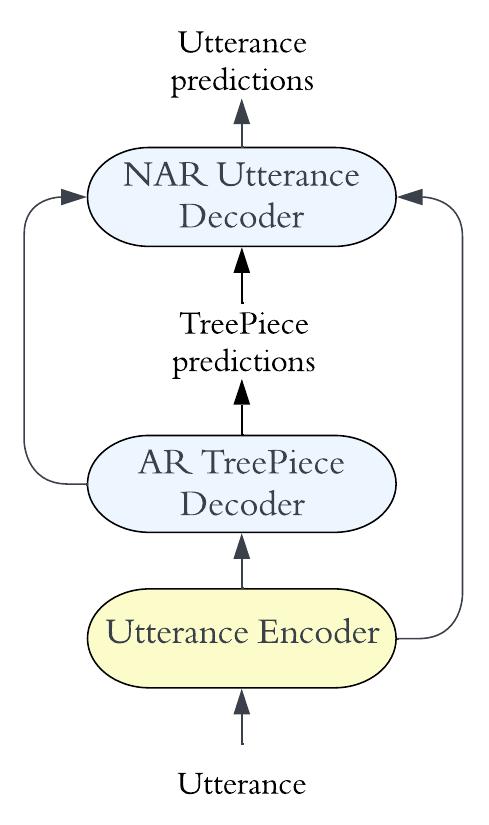}
\caption{TreePiece-based parse tree modeling design.}
\label{tp_model}
\end{figure}

\subsubsection{Assemble TreePiece units}
\label{recover}
Unlike subword-tokenization, where original sentence can be trivially recovered from subword units via string concatenation, there is no canonical way to reassemble \emph{TreePiece units}. To overcome this issue, we allow \emph{TreePiece units} to have placeholders\footnote{Finding all possible placeholder patterns is NP-hard and unnecessary. In Appendix \ref{oov} we provide a practical solution.}, and require that two units can only be joined at a placeholder node. This design provides a unique way to glue a sequence of ordered (e.g. pre/level-ordered) \emph{TreePiece units}, as shown in Figure \ref{glue}.

\section{Experiments}
\label{exp}
\subsection{Datasets}
\label{data}
We train, validate, and test our approach on the publicly available benchmark TOPv2 \cite{Chen2020LowResourceDA}, a multi-domain task-oriented semantic parsing dataset. The dataset provides a training/validation/test split. Throughout our experiments, we use the training split to train the models, the validation split for earlystopping, model checkpointing, and hyperparameter tuning, and the test split to report the best model's performance.
\subsection{Metrics}
\label{metrics}
We evaluate the model performance on two metrics: \emph{Exact Match} (EM) respectively \emph{Exact Match of Skeleton} (EM-S), defined to be the percentage of utterances whose \emph{logical forms} respectively \emph{skeletons} are correctly predicted \cite{Shrivastava2022RetrieveandFillFS}.  
\subsection{Baselines}
\label{baselines}
We compare our approach against 2 baselines: AR and NAR. Both baselines are sequence-to-sequence (seq2seq) that produces subword units of serialized logical forms. Their output space consists of \emph{ontologies} (prefixed by left bracket ``[''), \emph{utterance leaves}\footnote{We represent \emph{utterance leaves} in \emph{span-pointers} \cite{Shrivastava2021SpanPN} form to simplify the parsing task.}, and right bracket ``]''.
% We represent \emph{utterance leaves} in the form of \emph{span pointer} \cite{Shrivastava2021SpanPN}, a design that simplifies parsing task.
\smallskip
\\
\textbf{AR baseline} admits a standard AR structure. It has an \emph{autoregressive} decoder that generates serialized logical forms by producing one token at a time. 
\smallskip
\\
\textbf{NAR baseline} adopts \emph{mask-predict} \cite{Ghazvininejad2019ConstantTimeMT} with beam size $1$, which predicts the output length first and then generates all tokens in one step using a \emph{non-autoregressive} decoder.

\section{Results}
\label{results}

\begin{table}[]\centering
\begin{tabular}{cccc}
\hline
                                                               & TreePiece & AR    & NAR   \\ \hline
EM (\%)                                                       & 86.78     & \textbf{86.99} & 86.22 \\ \hline
EM-S(\%)                                                       & \textbf{89.14}     & 89.13 & 88.44 \\ \hline
\begin{tabular}[c]{@{}c@{}}Decoding (ms)\end{tabular}  & 33        & 152   & \textbf{27}    \\ \hline
\begin{tabular}[c]{@{}c@{}}Inference (ms)\end{tabular} & 74        & 196   & \textbf{68}    \\ \hline
\end{tabular}
\caption{Quality and latency of all models on TOPv2}
\label{key-results}
\end{table}

% \begin{figure*}[t]
%      \centering
%      \begin{minipage}[t]{0.4\linewidth}
%          \centering
%          \includegraphics[width=\linewidth]{}
%      \end{minipage}
%     \begin{minipage}[t]{0.4\linewidth}
%          \centering
%          \includegraphics[width=\linewidth]{}
%      \end{minipage}
%      \caption{\emph{EMR/SEMR} of all models on TOPv2 test split}
%      \label{quality-fig}
% \end{figure*}

% \begin{figure*}[t]
%      \centering
%      \begin{minipage}[t]{0.4\linewidth}
%          \centering
%          \includegraphics[width=\linewidth]{}
%      \end{minipage}
%     \begin{minipage}[t]{0.4\linewidth}
%          \centering
%          \includegraphics[width=\linewidth]{}
%      \end{minipage}
%      \caption{Averaged decoding/inference time of all models on TOPv2}
%      \label{latency-fig}
% \end{figure*}

We train each model with $3$ random seeds, and report the averaged EM/EM-S test scores. To obtain latency, we infer the trained models on the test split of TOPv2 dataset with CPU and report the averaged milliseconds over all samples. We defer the model configurations, training details, and comparisons with prior work to Appendix \ref{conf-hps}. 

\subsection{Quality}
As shown in Table \ref{key-results}, TreePiece model sees $0.7\%$ relative improvements over NAR and $0.2\%$ relative degradation from AR in terms of EM, while achieving the best EM-S score among all approaches, especially showing $0.8\%$ relative improvement over NAR. We attribute TreePiece's high quality on skeleton predictions to its ability to respect the tree structure of logical forms and generating $100\%$ valid outputs by design so that the model can better focus on utterance-understanding without being distracted by any structure issue.

\begin{figure}[h]
\centering
\includegraphics[width=0.45\textwidth, height=5.8cm]{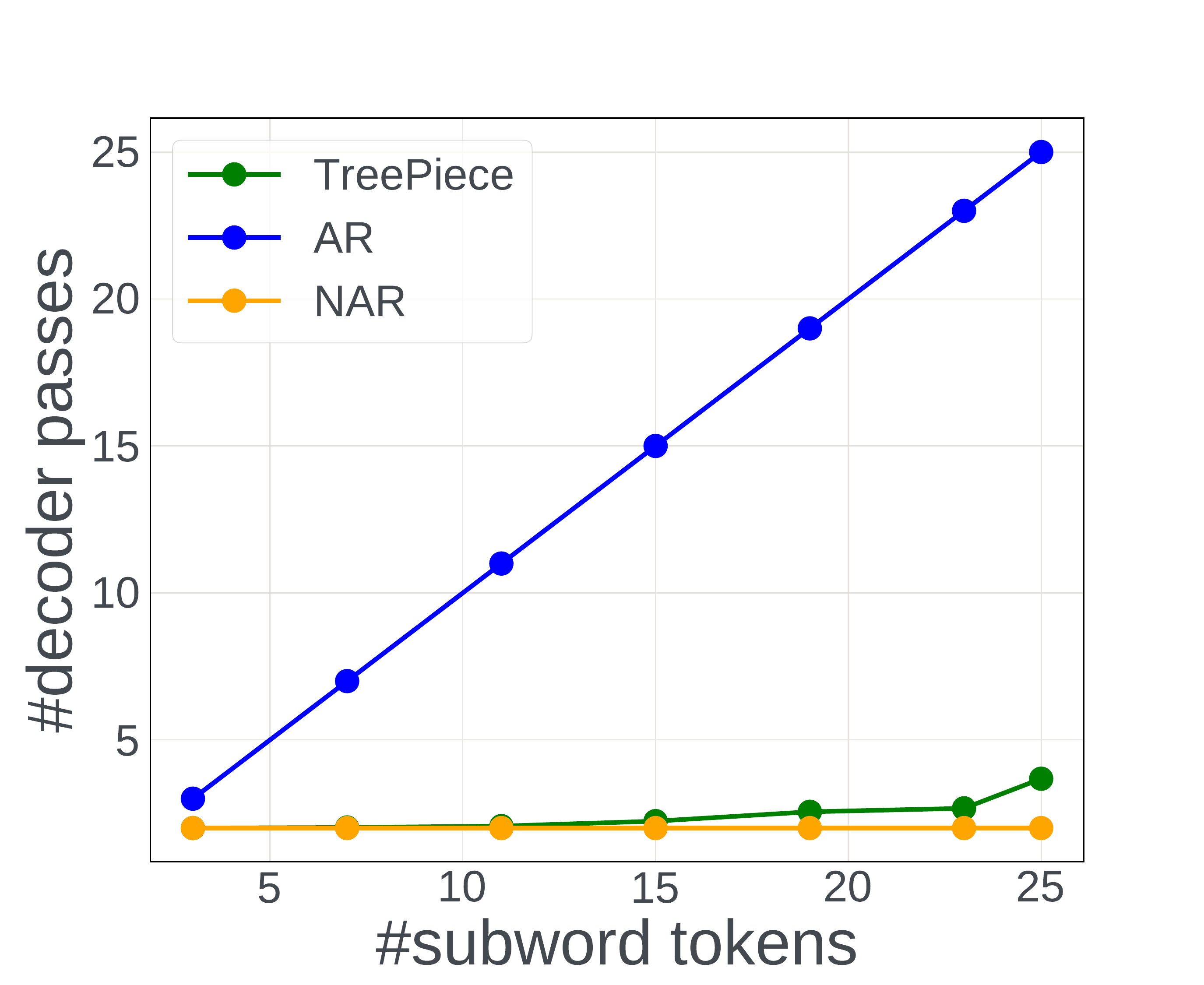}
\caption{Plot of averaged number of passes through decoder layers against number of tokens of a logical form tokenized by the BPE tokenizer of RoBERTa.}
\label{dec_steps}
\end{figure}

\subsection{Latency}
TreePiece makes decoding $4.6$ times faster and overall inference $2.7$ times faster than AR, with less than $10\%$ latency growth compared to NAR (see Table \ref{key-results}). The acceleration effects are illustrated in \ref{dec_steps}, showing that TreePiece substantially reduces the number of decoding steps and thereby requires fewer passes through decoder layers.\\
\\
\textbf{Related work} \emph{Autoregressive} modeling have been used in a range of \emph{Semantic Parsing} works \cite{Tai2015ImprovedSR,Cheng2017LearningSN,Dong2018CoarsetoFineDF}. Especially, the Sequence-to-Tree scheme was adopted by \cite{Dong2016LanguageTL}. To speed up the inference time, \emph{Non-autoregressive} modeling were introduced to the field of \emph{Machine Translation} \cite{Gu2017NonAutoregressiveNM,Lee2018DeterministicNN,libovicky-helcl-2018-end}, and later become popular in \emph{Semantic Parsing} as well \cite{Ghazvininejad2019ConstantTimeMT,Babu2021NonAutoregressiveSP,Shrivastava2021SpanPN}. However, to match the quality of AR, extra training stages are necessary such as \emph{Knowledge Distillation} from AR models \cite{Gu2017NonAutoregressiveNM,Lee2018DeterministicNN,Wei2019ImitationLF,Stern2019InsertionTF}. On the other hand, \cite{Rubin2020SmBoPSB} improves AR decoding's efficiency via \emph{Bottom-Up Parsing} \cite{Cheng2017LearningAE}. This paper takes a completely different path from all previous work by extending the subword tokenization algorithms \cite{nagata-1994-stochastic,scott2002,Sennrich2015NeuralMT,Kudo2018SubwordRI,Kudo2018SentencePieceAS} to trees.

\section*{Conclusion}
This paper proposes a novel technique to speed up \emph{Autoregressive} modeling for \emph{Semantic Parsing} via tokenizing parse trees into subtrees. We provide thorough elucidations and theoretical supports for our technique, and demonstrate significant improvements in terms of speed and quality over common baselines on the TOPv2 benchmark.

% \section*{Limitations}
% \section*{Ethics Statement}
% \section*{Acknowledgements}
% We thank Patrick Huber and Justin Rill for the inspiring discussions.

\bibliography{anthology,custom}
\bibliographystyle{acl_natbib}

\appendix
\section{Appendix}
\label{viterbi}

Algorithm \ref{viterbi-algo} outlines the Viterbi-type forward-backward \cite{Nagata1994ASJ} algorithm used in the EM procedure (ref. Algorithm \ref{em}) to compute the optimal tokenization and its probability for given skeleton $S$. We first obtain the set of all subtrees of $S$ that share the same root as $S$ denoted by $\mathcal{T}$. For convenience, we assume that $\mathcal{T}$ is graded by depth, such that $\mathcal{T}_d\subseteq \mathcal{T}$ denotes the set of all depth-$d$-subtrees. Next, we call \texttt{GetInitLogProbs} to initialize the log probability function on $\mathcal{T}$ as follows: 
\[\mathscr{L}(\mathfrak{t})=\begin{cases} 
      \log\boldsymbol{p}(\mathfrak{t}) &\text{ if }\mathfrak{t}\in \mathcal{V}, \\
      -\infty &\text{ otherwise}.
   \end{cases}\]
where $\mathcal{V}$ is the \emph{TreePiece vocabulary} and $\boldsymbol{p}$ the \emph{TreePiece simplex}. 

\begin{algorithm}
\caption{Forward-backward algorithm}\label{viterbi-algo}
\begin{algorithmic}
\Require TreePiece vocabulary $\mathcal{V}$, TreePiece simplex $\boldsymbol{p}$, and skeleton $S$.
\Ensure Parition $\pi_S(\boldsymbol{p})$ and probability $\mathbb{P}(S; \boldsymbol{p})$.
\State $\mathcal{T} \gets$ All subtrees of $S$ with the same root. 
\State $\mathscr{L} \gets$ \texttt{GetInitLogProbs}$(\boldsymbol{p})$
\State $\mathscr{P} \gets $ Constant map from $\mathcal{T}$ to \texttt{BOS} token
\State $d_{\text{max}}\gets\text{Depth of {S}}$
\State \textcolor{red}{// Forward begins}
\For{$d=1,2,\cdots,d_{\text{max}}$}
    \For{$\mathfrak{t}\in \mathcal{T}_d$}
        \For{$d^\prime=1,2,\cdots,d$}
            \For{$\mathfrak{t}^\prime\in \texttt{Filter}(\mathcal{T}_{d^\prime}, \mathfrak{t})$}
                \State $\Delta^*\gets$ $\mathfrak{t}^\prime\Delta\mathfrak{t}$ 
            
            \State $L^*\gets\mathscr{L}(\mathfrak{t}^\prime) + \sum_{\tau\in \Delta^*}\log\boldsymbol{p}(\tau)$
            \If{$L^* > \mathscr{L}(\mathfrak{t})$}
                \State $\mathscr{L}(\mathfrak{t}) \gets L^*$, $\mathscr{P}(\mathfrak{t}) \gets \mathfrak{t}^\prime$
            \EndIf
            \EndFor
        \EndFor
    \EndFor
\EndFor
\State $\mathbb{P}(S; \boldsymbol{p})\gets \exp(\mathscr{L}(S))$
\State \textcolor{red}{// Forward ends}
\State $\mathfrak{t}_{\text{curr}}\gets S$, $\pi_S(\boldsymbol{p})\gets \emptyset$
\State \textcolor{red}{// Backward begins}
\While{$\mathfrak{t}_{\text{curr}}\neq \texttt{BOS}$ token,}
    \State $\mathfrak{t}_{\text{prev}}\gets \mathscr{P}(\mathfrak{t}_{\text{curr}})$
    \State $\Delta^*\gets\mathfrak{t}_{\text{prev}}\Delta\mathfrak{t}_{\text{curr}}$
    \State $\pi_S(\boldsymbol{p}) \gets \pi_S(\boldsymbol{p})\bigcup \Delta^*$
    \State $\mathfrak{t}_{\text{curr}}\gets \mathfrak{t}_{\text{prev}}$
\EndWhile
\State $\pi_S(\boldsymbol{p}) \gets \pi_S(\boldsymbol{p})\bigcup \{\mathfrak{t}_{\text{curr}}\}$
\State \textcolor{red}{// Backward ends}
\State \Return $\pi_S(\boldsymbol{p}), \mathbb{P}(S; \boldsymbol{p})$
\end{algorithmic}
\end{algorithm}

The \emph{forward} step uses dynamic programming inductive on tree-depth to update all subtrees' log-probabilities and eventually obtain $\mathbb{P}(S;\boldsymbol{p})$ -- the probability of the skeleton $S$. The \emph{forward} step also returns a map $\mathscr{P}$ that stores for each $\mathfrak{t}\in\mathcal{T}$ the optimal position of its previous partition. Then in the \emph{backward} step we can backtrack along the path $S$, $\mathscr{P}(S), \mathscr{P}(\mathscr{P}(S)), \cdots$ to recover the optimal partition $\pi_S(\boldsymbol{p})$. Note that for dynamic programming's efficiency we apply a \texttt{Filter}$(\cdot, \mathfrak{t})$ method to narrow down the subtrees to those $\mathfrak{t}^\prime$ such that (1) $\mathfrak{t}^\prime$ is a subtree of $\mathfrak{t}$, (2) the set difference $\mathfrak{t}^\prime\Delta\mathfrak{t}$ has exactly one connected component and it is a \emph{TreePiece unit}. 

\section{Appendix}
\label{proof}
For convenience we adopt the following notations. 
\begin{notation}
\emph{Let $\boldsymbol{p}$ be a \emph{TreePiece simplex} and $\pi\eqqcolon [\tau_1,\cdots,\tau_k]$ be a partition where each $\tau_i$ is a \emph{TreePiece unit}. Define $\boldsymbol{p}(\pi)\eqqcolon\prod_{\tau\in \pi}\boldsymbol{p}(\tau)$.}
\end{notation}
\begin{notation}
\emph{Let $\pi\eqqcolon [\tau_1,\cdots,\tau_k]$ be a partition and $\tau$ be any \emph{TreePiece unit}. Define $n(\pi, \tau) \eqqcolon \sum_{\tau_i\in\pi}\mathbbm{1}_{\tau = \tau_i}$. In other words, $n(\pi, \tau)$ is the number of appearances of $\tau$ in $\pi$.}
\end{notation}
Now we introduce a general hypothesis and will prove a key lemma under this hypothesis.
\begin{hypothesis}
\label{general-hypo}
The joint distribution of skeleton $S$ and partition $\pi$ satisfies the following rule,
\begin{equation}
\label{general-hypo-eqn}
   \mathbb{P}(S, \pi; \boldsymbol{p}) \propto \left\{ 
  \begin{array}{ l l }
    \prod_{\tau\in \pi} \boldsymbol{p}(\tau)\cdot\chi(\pi, \boldsymbol{p}) \text{ if }
    \pi\in \Pi_S\\
    0 \textrm{ otherwise,}
  \end{array}
\right.
\end{equation}
where $\chi:\Pi_S\times [0, 1]^{|\mathcal{V}|}\to \{0, 1\}$ is locally smooth almost everywhere (under Lebesgue measure on $[0,1]$). In other words, for $a.e.$ $\boldsymbol{p}\in [0, 1]^{|\mathcal{V}|}$ and every $\pi\in\Pi_S$ there exists a neighborhood $B_\epsilon(\boldsymbol{p})$ where $\chi(\pi, \cdot)$ is constant. 
\end{hypothesis}
\begin{remark}
Assumption \eqref{singleton} is a special case of hypothesis \ref{general-hypo}, where $\chi(\pi, \boldsymbol{p}) = 1$ if $\pi = \pi_S(\boldsymbol{p})$ and $0$ otherwise.
\end{remark}
\begin{remark}
Without loss of generality, in equations \eqref{singleton} and \eqref{general-hypo-eqn} we replace the symbol ``$\propto$'' with ``$=$'', which otherwise complicates all formulae expressions with a non-essential scalar constant.
\end{remark}
\begin{lemma}
\label{gen-lemma}
Under Hypothesis \ref{general-hypo}, $\boldsymbol{p}_{k+1}$ is a solution to \eqref{em} iff the following holds $\forall\tau^*\in \mathcal{V}$ :
\begin{equation}
\label{solution}
\boldsymbol{p}_{k+1}(\tau^*) = \frac{\sum_{S\in\mathscr{S}}\mathbb{E}_{\Pi_S}[n(\pi, \tau^*) | S; \boldsymbol{p}_k]}{\sum_{\tau\in\mathcal{V}}\sum_{S\in\mathscr{S}}\mathbb{E}_{\Pi_S}[n(\pi, \tau)|S; \boldsymbol{p}_k]}.
\end{equation}
\end{lemma}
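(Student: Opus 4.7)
The plan is to unpack the objective in \eqref{em} into a concave function of $\boldsymbol{p}$ and then apply Lagrange multipliers against the simplex constraint $\sum_{\tau\in\mathcal{V}}\boldsymbol{p}(\tau)=1$. Using Hypothesis~\ref{general-hypo}, for $\pi\in\Pi_S$ I write
\[
\log\mathbb{P}(S,\pi;\boldsymbol{p})=\sum_{\tau\in\pi}\log\boldsymbol{p}(\tau)+\log\chi(\pi,\boldsymbol{p}),
\]
and then I would convert the sum over $\tau\in\pi$ into a sum over the full vocabulary using the counting notation $n(\pi,\tau)$:
\[
\sum_{\tau\in\pi}\log\boldsymbol{p}(\tau)=\sum_{\tau\in\mathcal{V}} n(\pi,\tau)\log\boldsymbol{p}(\tau).
\]
Plugging this into \eqref{cond} and swapping finite sums with the conditional expectation by linearity, the objective (up to the $\chi$ term) collapses to $\sum_{\tau\in\mathcal{V}} c_k(\tau)\log\boldsymbol{p}(\tau)$, where $c_k(\tau)\coloneqq\sum_{S\in\mathscr{S}}\mathbb{E}_{\Pi_S}[n(\pi,\tau)\mid S;\boldsymbol{p}_k]$ is exactly the numerator appearing in \eqref{solution}.

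Next I would handle $\log\chi$. For any $\pi$ in the support of $\mathbb{P}(\cdot\mid S;\boldsymbol{p}_k)$, necessarily $\chi(\pi,\boldsymbol{p}_k)=1$; by the almost-everywhere local constancy hypothesis on $\chi(\pi,\cdot)$ there is a neighborhood $B_\epsilon(\boldsymbol{p}_k)$ on which $\chi(\pi,\boldsymbol{p})\equiv 1$, so $\log\chi$ contributes $0$ throughout this neighborhood. For $\pi$ outside the support, the conditional weight is $0$ and the $-\infty$ log is annihilated under the convention $0\cdot(-\infty)=0$. Hence the argmax of \eqref{em} agrees locally at $\boldsymbol{p}_k$ with the argmax of the concave function $\boldsymbol{p}\mapsto\sum_{\tau}c_k(\tau)\log\boldsymbol{p}(\tau)$ on the simplex.

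Finally I would form the Lagrangian
\[
\mathcal{L}(\boldsymbol{p},\lambda)=\sum_{\tau\in\mathcal{V}}c_k(\tau)\log\boldsymbol{p}(\tau)-\lambda\Bigl(\sum_{\tau\in\mathcal{V}}\boldsymbol{p}(\tau)-1\Bigr),
\]
set $\partial\mathcal{L}/\partial\boldsymbol{p}(\tau^*)=c_k(\tau^*)/\boldsymbol{p}(\tau^*)-\lambda=0$, and solve $\boldsymbol{p}(\tau^*)=c_k(\tau^*)/\lambda$. Summing over $\tau^*$ and using the simplex constraint yields $\lambda=\sum_{\tau}c_k(\tau)$, which after substitution gives precisely \eqref{solution}. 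Strict concavity of $\log$ makes this critical point the unique global maximizer on the simplex interior, so the Karush–Kuhn–Tucker condition is both necessary and sufficient, producing the ``iff'' of the lemma.

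The main obstacle I anticipate is the careful treatment of $\chi$: one must justify that the optimization truly reduces to the $\log\boldsymbol{p}(\tau)$ part, which is where the local-constancy clause of Hypothesis~\ref{general-hypo} is doing real work. A secondary subtlety is verifying that the unconstrained stationary point actually lies in the simplex interior (so that the $0\le\boldsymbol{p}(\tau)$ constraints are inactive); this is immediate provided $c_k(\tau)>0$ for every $\tau\in\mathcal{V}$ that can appear in any positive-probability partition, and $\tau$'s with $c_k(\tau)=0$ are assigned $\boldsymbol{p}(\tau^*)=0$ without loss. Once those two points are resolved, the rest is a standard Lagrange calculation.
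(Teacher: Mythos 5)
Your proposal follows essentially the same route as the paper's proof: rewrite the expected complete-data log-likelihood as $\sum_{\tau}c_k(\tau)\log\boldsymbol{p}(\tau)$ using the counts $n(\pi,\tau)$, dismiss the $\log\chi$ contribution via the almost-everywhere local constancy in Hypothesis~\ref{general-hypo}, and solve the Lagrange stationarity conditions on the simplex to obtain \eqref{solution}, with concavity/uniqueness giving the ``iff''. Your added remarks on the $0\cdot(-\infty)$ convention, the interior-point issue, and strict concavity are slightly more careful than the paper's treatment, but they refine rather than change the argument.
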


\begin{proof}[proof of Theorem \ref{confirmem} assuming Lemma \ref{gen-lemma} holds]
By following the \emph{E-step} of Algorithm \ref{token}, we can express the frequency $\mathcal{F}^*(t)$ as
\begin{equation}
\label{freq}
\sum_{S\in\mathscr{S}}\sum_{\tau\in \pi_S(\boldsymbol{p}_i)}\mathbbm{1}_{\tau=t} = \sum_{S\in\mathscr{S}}n(\pi_S(\boldsymbol{p}_i), t).
\end{equation}
Assumption \ref{singleton} says that the probability measure $\mathbb{P}(\pi|S; \boldsymbol{p}_k)$ is supported on the singleton $\pi_S(\boldsymbol{p}_i)$, therefore the following holds for all $\tau\in\mathcal{V}$:
\begin{equation}
\label{singleton-freq}
n(\pi_S(\boldsymbol{p}_i), \tau) = \mathbb{E}_{\Pi_S}[n(\pi, \tau) | S; \boldsymbol{p}_k].
\end{equation}
Now inserting the identity \eqref{singleton-freq} to the right hand side of equation \eqref{freq} we obtain
\begin{equation}
\label{back}
\mathcal{F}^*(t) = \sum_{S\in\mathscr{S}}\mathbb{E}_{\Pi_S}[n(\pi, t) | S; \boldsymbol{p}_k].
\end{equation}
Next, Inserting \eqref{back} to the \emph{M-step} in Algorithm \ref{token}, we have
\begin{equation}
\label{equiv}
\boldsymbol{p}_{k+1}(t) = \frac{\sum_{S\in\mathscr{S}}\mathbb{E}_{\Pi_S}[n(\pi, t) | S; \boldsymbol{p}_k]}{\sum_{\tau\in\mathcal{V}}\sum_{S\in\mathscr{S}}\mathbb{E}_{\Pi_S}[n(\pi, \tau)|S; \boldsymbol{p}_k]}.
\end{equation}
Invoking Lemma \ref{gen-lemma}, we see that $\boldsymbol{p}_{k+1}$ is the solution to problem \eqref{em}, which proves the first conclusion in Theorem \ref{confirmem}. 

Secondly, the monotonicity of $\log\mathbb{P}(S;\boldsymbol{p}_i)$ can be achieved as follows:
\begin{eqnarray}
\begin{split}
\label{ineq}
&\sum_{S\in \mathscr{S}} \log \mathbb{P}(S;\boldsymbol{p}_{k+1}) \\
=&\sum_{S\in \mathscr{S}} \log \mathbb{P}(S, \pi_S(\boldsymbol{p}_{k+1});\boldsymbol{p}_{k+1})\\
=& \sum_{S\in \mathscr{S}} \boldsymbol{p}_{k+1}(\pi_S( \boldsymbol{p}_{k+1}))\\
\ge& \sum_{S\in \mathscr{S}} \boldsymbol{p}_{k+1}(\pi_S( \boldsymbol{p}_{k}))\\
=& \sum_{S\in \mathscr{S}} \log \mathbb{P}(S, \pi_S(\boldsymbol{p}_{k});\boldsymbol{p}_{k+1})\\
=& \sum_{S\in \mathscr{S}} \mathbb{E}[\log \mathbb{P}(S, \pi_S(\boldsymbol{p}_{k});\boldsymbol{p}_{k+1})| S; \boldsymbol{p}_k]\\
\ge& \sum_{S\in \mathscr{S}} \mathbb{E}[\log \mathbb{P}(S, \pi_S(\boldsymbol{p}_{k});\boldsymbol{p}_{k})| S; \boldsymbol{p}_k]\\
=& \sum_{S\in \mathscr{S}} \boldsymbol{p}_{k}(\pi_S( \boldsymbol{p}_{k}))\\
=&\sum_{S\in \mathscr{S}} \log \mathbb{P}(S, \pi_S(\boldsymbol{p}_{k});\boldsymbol{p}_{k})\\
=&\sum_{S\in \mathscr{S}} \log \mathbb{P}(S;\boldsymbol{p}_{k}).
\end{split}
\end{eqnarray}
Here, all equalities are consequences of Assumption \ref{singleton}, the first inequality follows from the definition of $\pi_S(\boldsymbol{p}_{k+1})$, and the second inequality uses the maximization property of $\boldsymbol{p}_{k+1}$: 
\begin{equation}
\boldsymbol{p}_{k+1}=\text{argmax}_{\boldsymbol{p}}\sum_{S\in \mathscr{S}} \mathbb{E}[\log \mathbb{P}(S, \pi_S(\boldsymbol{p}_{k});\boldsymbol{p})| S; \boldsymbol{p}_k].
\end{equation}
Thus concludes Theorem \ref{confirmem}.
\end{proof}
To complete the proof of Theorem \ref{confirmem} it suffices to prove Lemma \ref{gen-lemma}.
\begin{proof}[proof of Lemma \ref{gen-lemma}]
Consider the following Lagrange multiplier of problem \eqref{em}:
\begin{eqnarray}
\begin{split}\mathcal{L}(\boldsymbol{p}, \lambda) =& \sum_{S\in\mathscr{S}}\mathbb{E}\big{[}\log \mathbb{P}(S, \pi; \boldsymbol{{p}}) \big{\rvert} S; \boldsymbol{{p}_k}\big{]} \\
&+ \lambda(\sum_{\tau\in\mathcal{V}}\boldsymbol{p}(\tau) - 1).
\end{split}
\end{eqnarray}
Plugging \eqref{general-hypo-eqn} into the above equation, we get
\begin{eqnarray}
\begin{split}
\label{terms}
& \sum_{S\in\mathscr{S}}\sum_{\pi\in\Pi_S} \log \boldsymbol{p}(\pi)\cdot \mathbb{P}(\pi\big{\rvert}S; \boldsymbol{p}_k) \\
+ & \sum_{S\in\mathscr{S}}\sum_{\pi\in\Pi_S} \log \chi(\pi, \boldsymbol{p}) \cdot \mathbb{P}(\pi\big{\rvert}S; \boldsymbol{p}_k) \\
+ & \lambda(\sum_{\tau\in\mathcal{V}}\boldsymbol{p}(\tau) - 1) \\
\coloneqq &\text{I} + \text{II} + \text{III}.
\end{split}
\end{eqnarray}
Inserting equation \eqref{terms} to the following identity:
\begin{equation}
\label{lagrange}
\nabla_{\boldsymbol{p}, \lambda} \mathcal{L} = \boldsymbol{0},
\end{equation}
we obtain for each $\tau^*\in\mathcal{V}$ that
\begin{eqnarray}
\begin{split}
\label{crit}
\sum_{S\in\mathscr{S}}\sum_{\pi\in\Pi_S}\frac{n(\pi,\tau^*)}{\boldsymbol{p}(\tau^*)}\cdot \mathbb{P}(\pi\big{\rvert} S; \boldsymbol{p}_k) + \lambda = 0.
\end{split}
\end{eqnarray}
Note the locally constant assumption in Hypothesis \ref{general-hypo} makes the derivative of term II vanishes $a.e.$. Identity \eqref{crit} then allows us to solve for $\boldsymbol{p}(\tau^*)$:
\begin{equation}
\label{eachtp}
\boldsymbol{p}(\tau^*)=-\frac{1}{\lambda}\cdot \sum_{S\in\mathscr{S}}\sum_{\pi\in\Pi_S}n(\pi,\tau^*)\cdot \mathbb{P}(\pi\big{\rvert} S; \boldsymbol{p}_k).
\end{equation}
Next, using the simplex property $\sum_{\tau\in\mathcal{V}} \boldsymbol{p}(\tau) = 1$ and summing up \eqref{eachtp} over $\mathcal{V}$, we find $\lambda$:
\begin{equation}
-\frac{1}{\sum_{\tau\in\mathcal{V}}\sum_{S\in\mathscr{S}}\sum_{\pi\in\Pi_S}n(\pi,\tau)\cdot \mathbb{P}(\pi\big{\rvert} S; \boldsymbol{p}_k)}.
\end{equation}
Plugging the above value of $\lambda$ back to \eqref{eachtp}, we obtain the final expression of $\boldsymbol{p}(\tau^*)$:
\begin{eqnarray}
\begin{split}
\label{conclusion}
&\frac{\sum_{S\in\mathscr{S}}\sum_{\pi\in\Pi_S}n(\pi,\tau^*)\cdot \mathbb{P}(\pi\big{\rvert} S; \boldsymbol{p}_k)}{\sum_{\tau\in\mathcal{V}}\sum_{S\in\mathscr{S}}\sum_{\pi\in\Pi_S}n(\pi,\tau)\cdot \mathbb{P}(\pi\big{\rvert} S; \boldsymbol{p}_k)}\\
=&\frac{\sum_{S\in\mathscr{S}}\mathbb{E}\big{[}n(\pi,\tau^*)\big{\rvert}S; \boldsymbol{p}_k\big{]}}{\sum_{\tau\in\mathcal{V}}\sum_{S\in\mathscr{S}}\mathbb{E}\big{[}n(\pi,\tau)\big{\rvert}S; \boldsymbol{p}_k\big{]}},
\end{split}
\end{eqnarray}
which is precisely \eqref{solution}. This proves the \emph{if} direction of the Lemma. Indeed, a maximizer $\boldsymbol{p}$ must be a critical point of the Lagrange multiplier and satisfy \eqref{lagrange}, therefore identity \eqref{conclusion} holds. Conversely, identity \eqref{conclusion} for arbitrary $\tau^*\in\mathcal{V}$ fully characterize $\boldsymbol{p}$, and by the \emph{if} direction it can only be the unique maximum. This proves the opposite direction, and completes the proof of Lemma \ref{gen-lemma}.
\end{proof}

\section{Appendix}
\label{ffbs}
We propose Algorithm \ref{ffbs-algo}, a Forward-Filtering Backward-Sampling (FFBS) \cite{scott2002,Kudo2018SubwordRI,Kudo2018SentencePieceAS} algorithm under the setting of TreePiece. We highlight those lines in Algorithm \ref{ffbs-algo} that differ from Algorithm \ref{viterbi-algo}. Their main distinctions lie in (1) update formula for probabilities, (2) backward strategy. % Both are consequences of the different joint distribution assumptions \eqref{singleton} and \eqref{dense}. 

\sethlcolor{lime}
\begin{algorithm}
\caption{Forward-Filtering Backward Sampling (FFBS) Algorithm}\label{ffbs-algo}
\begin{algorithmic}
\Require TreePiece vocabulary $\mathcal{V}$, TreePiece simplex $\boldsymbol{p}$, skeleton $S$, \hl{sampling coefficient $\theta$}.
\Ensure Parition $\pi_S(\boldsymbol{p})$ and probability $\mathbb{P}(S; \boldsymbol{p})$.
\State $\mathcal{T} \gets$ All subtrees of $S$ with the same root. 
\State $\mathscr{L} \gets$ $\texttt{GetInitLogProbs}(\boldsymbol{p})$
\State \hl{$\mathscr{Q} \gets$ $\exp\circ\mathscr{L}$}
\State \hl{$\mathscr{P} \gets$ \texttt{GetInitPairProbs}$(\boldsymbol{p})$}
\State $d_{\text{max}}\gets\text{Depth of {S}}$
\State \textcolor{red}{// Forward begins}
\For{$d=1,2,\cdots,d_{\text{max}}$}
    \For{$\mathfrak{t}\in \mathcal{T}_d$}
        \For{$d^\prime=1,2,\cdots,d$}
            \For{$\mathfrak{t}^\prime\in \texttt{Filter}(\mathcal{T}_{d^\prime}, \mathfrak{t})$}
                \State $\Delta^*\gets$ $\mathfrak{t}^\prime\Delta\mathfrak{t}$ 
            
                \State \hl{$Q^*\gets\mathscr{Q}(\mathfrak{t}^\prime) \cdot \prod_{\tau\in \Delta^*}\boldsymbol{p}(\tau)$}

                \State \hl{$\mathscr{Q}(\mathfrak{t}) \gets \mathscr{Q}(\mathfrak{t}) + Q^*$}
                \State \hl{$\mathscr{P}(\mathfrak{t}, \mathfrak{t}^\prime)\gets Q^*$}
            \EndFor
        \EndFor
    \EndFor
\EndFor
\State \hl{$\mathbb{P}(S; \boldsymbol{p})\gets \mathscr{Q}(S)$}
\State \textcolor{red}{// Forward ends}
\State $\mathfrak{t}_{\text{curr}}\gets S$, $\pi_S(\boldsymbol{p})\gets \emptyset$
\State \textcolor{red}{// Backward begins}
\While{$\mathfrak{t}_{\text{curr}}\neq \texttt{BOS}$ token,}
    \State \hl{$\mathfrak{t}_{\text{prev}}\gets \texttt{Sampling}(\mathscr{P}, \mathfrak{t}_{\text{curr}}, \theta)$}
    \State $\Delta^*\gets\mathfrak{t}_{\text{prev}}\Delta\mathfrak{t}_{\text{curr}}$
    \State $\pi_S(\boldsymbol{p}) \gets \pi_S(\boldsymbol{p})\bigcup \Delta^*$
    \State $\mathfrak{t}_{\text{curr}}\gets \mathfrak{t}_{\text{prev}}$
\EndWhile
\State $\pi_S(\boldsymbol{p}) \gets \pi_S(\boldsymbol{p})\bigcup \{\mathfrak{t}_{\text{curr}}\}$
\State \textcolor{red}{// Backward ends}
\State \Return $\pi_S(\boldsymbol{p}), \mathbb{P}(S; \boldsymbol{p})$
\end{algorithmic}
\end{algorithm}

Before \emph{forward}, we call \texttt{GetInitPairProbs} to initialize a probability function on the Cartesian product space $\mathcal{T}\times \mathcal{T}\bigcup \{\texttt{BOS}\}$ as follows:
\[\mathscr{P}(\mathfrak{t}, \mathfrak{t}^\prime)=\begin{cases} 
      \boldsymbol{p}(\mathfrak{t}) &\text{ if }\mathfrak{t}\in \mathcal{V}\text{ and }\mathfrak{t}^\prime = \texttt{BOS}\text{ token}, \\
      0 &\text{ otherwise}.
   \end{cases}\]
During \emph{backward}, we call \texttt{Sampling} to randomly sample a previous subtree of $\mathfrak{t}$ with respect to the following distribution:
\begin{equation}
\Big{\{} \frac{\exp(\theta\cdot\log\mathscr{P}(\mathfrak{t}^\prime, \mathfrak{t}))}{\sum_{\mathfrak{s}\in \mathcal{T}(\mathfrak{t})} \exp(\theta\cdot\log\mathscr{P}(\mathfrak{s}, \mathfrak{t}))}\Big{\}}_{\mathfrak{t}^\prime \in \mathcal{T}(\mathfrak{t})}
\end{equation}
where $\mathcal{T}(\mathfrak{t}) \eqqcolon \{t^\prime \in \mathcal{T}: \mathscr{P}(\mathfrak{t}, \mathfrak{t}^\prime) > 0\}$. Here a smaller $\theta$ leads to a more uniform sampling distribution among all partitions, while a larger $\theta$ tend to select the Viterbi partition picked by Algorithm \ref{viterbi-algo} \cite{Kudo2018SubwordRI}.

Algorithm \ref{ffbs-algo} allows us to sample from all possible partitions rather than generating fixed patterns. In practice, this version is used in place of Algorithm \ref{viterbi-algo} to reduce the OOV rates; see Appendix \ref{oov} for further discussions.

\begin{remark}
\emph{Let us assume the following holds in place of Assumption \eqref{singleton}:}
\begin{equation}
\label{dense}
   \mathbb{P}(S, \pi; \boldsymbol{p}) \propto \left\{ 
  \begin{array}{ c l }
    \prod_{\tau\in \pi} \boldsymbol{p}(\tau) & \text{if }
    \pi\in \Pi_S\\
    0                 & \textrm{otherwise,}
  \end{array}
\right.
\end{equation}
\emph{another special case of Hypothesis \ref{general-hypo} with $\chi(\pi,\boldsymbol{p})\equiv 1$. By Lemma \ref{gen-lemma}, solving problem \eqref{em} requires computing $\mathbb{E}_{\Pi_S}[n(\pi, \tau)|S; \boldsymbol{p}_k]$, which now becomes NP-hard. But we can utilize Algorithm \ref{ffbs-algo} to obtain an approximate solution. Indeed, if we iteratively run Algorithm \ref{ffbs-algo} in place of the \emph{E-step} in Algorithm \ref{em} $K$ times to obtain a partition sequence $\pi_S(\boldsymbol{p})^{(1)},\pi_S(\boldsymbol{p})^{(2)},\cdots, \pi_S(\boldsymbol{p})^{(K)}$, and use the averaged partitions to update the frequency $\mathcal{F}^*$, then following similar lines in Appendix \ref{proof}, we can prove an asymptotic version of Theorem \ref{confirmem} under Assumption \ref{dense}, by showing that the averaged frequency over $K$ partitions converges to $\mathbb{E}\big{[}n(\pi,\tau)\big{\rvert}S; \boldsymbol{p}_k\big{]}$ as $K$ tends to infinity, a direct consequence of \emph{Law of Large Numbers}. We omit the details.}
\end{remark}

\begin{table*}[]\centering
\resizebox{1.0\textwidth}{!}{%
\begin{tabular}{c|ccc|ccc|cl}
\hline
                                                              & \multicolumn{3}{c|}{TreePiece model} & \multicolumn{3}{c|}{NAR baseline} & \multicolumn{2}{c}{AR baseline} \\ \hline
Modules &
  Encoder &
  \begin{tabular}[c]{@{}c@{}}TreePiece \\ decoder\end{tabular} &
  \begin{tabular}[c]{@{}c@{}}Utterance \\ decoder\end{tabular} &
  Encoder &
  \begin{tabular}[c]{@{}c@{}}Length \\ predictor\end{tabular} &
  Decoder &
  Encoder &
  Decoder \\ \hline
\begin{tabular}[c]{@{}c@{}}Learning \\ rates\end{tabular} &
  $4\times 10^{-6}$ &
  $6\times 10^{-5}$ &
  $4\times 10^{-5}$ &
  $2\times 10^{-5}$ &
  $1\times 10^{-4}$ &
  $6\times 10^{-5}$ &
  $4\times 10^{-6}$ &
  $2\times 10^{-5}$ \\ \hline
\begin{tabular}[c]{@{}c@{}}Decay \\ coefficients\end{tabular} & \multicolumn{3}{c|}{$0.999$}         & \multicolumn{3}{c|}{$0.999$}      & \multicolumn{2}{c}{$0.9999$}    \\ \hline
\end{tabular}%
}
\caption{Optimization hyperparameter choices for all models}
\label{hp-choices}
\end{table*}
\begin{figure*}[t]
     \centering
     \begin{minipage}[t]{1.0\linewidth}
         \centering
         \includegraphics[width=0.8\textwidth, height=2cm]{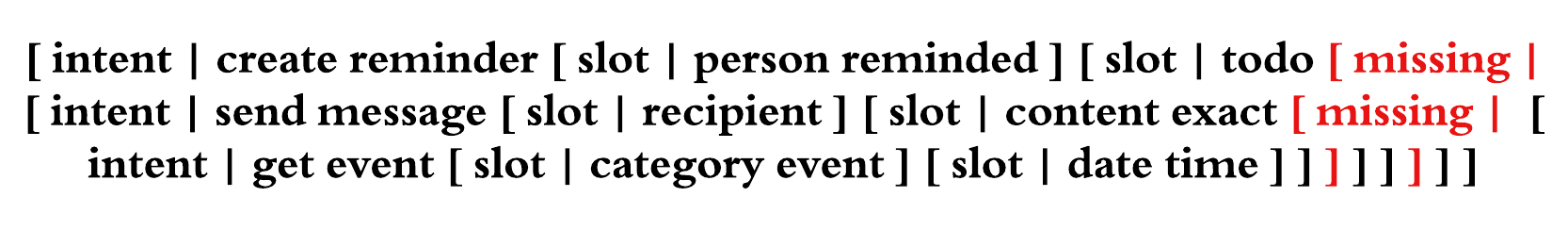}
     \end{minipage}
     \caption{Serializing the skeleton in Figure \ref{glue} (LHS) using the \emph{``placeholder nest''} design.}
     \label{ser}
\end{figure*}

\section{Appendix}
\label{oov}
As discussed in Subsection \ref{recover}, a placeholder structure is necessary for well-defined assembly of \emph{TreePiece units}. However, adding all possible placeholder patterns to vocabulary is impractical for both time and memory. Instead, we shall include only those patterns that most likely to occur. To do so, we tokenize every training skeleton and add the results to the \emph{TreePiece vocabulary}. As illustrated by the ``Tokenize'' direction in Figure \ref{glue}, when a node loses a child during tokenization, we attach a placeholder to the missing child's position. 
\begin{remark}\label{oov-sampling}
\emph{There may exist new placeholder patterns that are \emph{Out-Of-Vocabulary} (OOV) at inference time. To mitigate OOV, we apply Algorithm \ref{ffbs-algo} (in place of Algorithm \ref{viterbi-algo}) to tokenize each training skeleton $K_0$ times. Both $K_0$ and the sampling coefficient $\theta_0$ will be specified in Appendix \ref{tp-vocab-hps}. Intuitively, with a larger $K_0$ and a smaller $\theta_0$, Algorithm \ref{ffbs-algo} is able to generate more abundant placeholder patterns to cover as many OOV placeholders as possible.}
\end{remark}

% \begin{figure}[h]
% \centering
% \includegraphics[width=0.45\textwidth, height=5.8cm]{images/dec_steps.pdf}
% \caption{Plot of averaged number of passes through decoder layers against number of tokens in a tokenized logical form using RoBERTa's BPE tokenizer.}
% \label{dec_steps}
% \end{figure}

\section{Appendix}
\label{conf-hps}
\subsection{Model configurations}
\subsubsection{Model architectures}
Across all of our experiments, we use RoBERTa-base encoder \cite{Liu2019RoBERTaAR} and transformer \cite{Vaswani2017AttentionIA} decoders. For encoder architecture, we refer the readers to \cite{Liu2019RoBERTaAR}. All models' decoder have the same multi-head-transformer-layer architecture (embedding dimension $768$, $12$ heads). Both AR and NAR's decoders have $2$ layers. For TreePiece archtecture, \emph{TreePiece decoder} and \emph{Utterance decoder} has $1$ layer each. Note for fairness of comparisons, we let each model have exactly $2$ decoder layers in total.

\subsubsection{TreePiece vocabulary}
\label{tp-vocab-hps}
We extract from the TOPv2 dataset $162$ ontologies in total, and use TOPv2 training split as training corpus to iteratively run vocabulary generation (ref. Subsection \ref{vocab-gen}) $600$ times to obtain a vocabulary of size $762$. Then apply the Algorithm \ref{token} with $N_0 = 30$ and $\epsilon_0 = 0.01$ to train the tokenizer. Finally, we follow Appendix \ref{oov} (with $K_0= 10, \theta_0 = 0.15$) and expand the \emph{TreePiece vocabulary} to size $2153$. Note the vocabulary obtained this way has less than $0.1\%$ OOV rate on test dataset, compared to $0.45\%$ were we not using the sampling trick in Remark \ref{oov-sampling}.

\subsection{Training details}
\subsubsection{TreePiece embedding}
Within the \emph{TreePiece decoder}, we tie the classifer head's weight to the \emph{TreePiece unit} embedding matrix, and found it beneficial to pretrain this weight rather than randomly initializing it. We  take inspirations from \cite{Shrivastava2022RetrieveandFillFS} and create the pretraining corpus by serializing all \emph{skeletons} in the training dataset. To let the placeholder information blend into the corpus, we introduce a \emph{placeholder nest} structure and add it to the serialized logical forms, as illustrated by Figure \ref{ser}. Finally, we use the masked language model (MLM) \cite{Devlin2019BERTPO} pre-training objective with mask-rate $0.15$ and train for up to $20$ epochs until convergence.

\subsubsection{Hyperparameter choices}
\label{hps}
Across all experiments, we set the batch size to be $256$, and total number of epochs to be $100$ with early stopping when validation EM (ref. Subsection \ref{metrics}) stops improving. For optimization, we use Adam optimizer \cite{Kingma2014AdamAM} with weight decay $0.01$ and $\epsilon=10^{-8}$. In addition, we warm up the learning rate in $5$ epochs and then exponentially decay \cite{Senior2013AnES} at the end of every epoch. 

We also observe that each module favors learning rates with different magnitude, so we do learning rate search separately for each module among the interval $[10^{-6}, 10^{-4}]$. For exponential decay coefficients we optimize them among $\{0.9, 0.99, 0.999, 0.9999\}$. Table \ref{hp-choices} summarizes our final choices of these hyperparameters.

\subsection{Comparisons with prior work}
\label{comparisons}
In order to prioritize the main thrust of our paper, we opted for standard and straightforward training procedures in our experiments, without incorporating additional techniques such as label smoothing, R3F loss, beam search decoding, etc. As a result, the baseline numbers reported in Table \ref{key-results} cannot be directly compared to those of previous works \cite{Rongali2020DontPG,Shrivastava2021SpanPN,Shrivastava2022RetrieveandFillFS} that heavily rely on these training methodologies.

% To better focus on conveying the main idea of this paper, our experiments follow standard and plain training procedures, and do not involve any additional tricks such as label smoothing \cite{Pereyra2017RegularizingNN}, R3F loss \cite{Aghajanyan2020BetterFB}, \texttt{SMART} loss \cite{Jiang2019SMARTRA}, \texttt{FreeLB} loss \cite{Zhu2019FreeLBEA}, knowledge distillation loss \cite{Hinton2015DistillingTK}, so the baseline numbers presented in Table \ref{key-results} are not directly comparable to the prior works \cite{Rongali2020DontPG,Shrivastava2021SpanPN,Shrivastava2022RetrieveandFillFS} that substantially exploits these training recipes.
\end{document}